\newcommand{\gcnmodelname}{SPA\xspace}
\newcommand{\transmodelname}{TransFlood\xspace}
\newcommand{\gcnfloodmodelname}{GCNFlood\xspace}
\DeclareMathOperator*{\argmin}{arg\,min}
\theoremstyle{plain}
\newtheorem{theorem}{Theorem}[section]
\theoremstyle{definition}
\theoremstyle{remark}
\newtheorem{remark}[theorem]{Remark}
\icmltitlerunning{A Similarity Flooding Perspective for Multi-sourced Knowledge Graph Embeddings}
\begin{document}

\twocolumn[
\icmltitle{What Makes Entities Similar? A Similarity Flooding Perspective for Multi-sourced Knowledge Graph Embeddings}



\icmlsetsymbol{equal}{*}

\begin{icmlauthorlist}
\icmlauthor{Zequn Sun}{skl}
\icmlauthor{Jiacheng Huang}{skl}
\icmlauthor{Xiaozhou Xu}{alibaba}
\icmlauthor{Qijin Chen}{alibaba}
\icmlauthor{Weijun Ren}{alibaba}
\icmlauthor{Wei Hu}{skl,hds}
\end{icmlauthorlist}

\icmlaffiliation{skl}{State Key Laboratory for Novel Software Technology, Nanjing University, Nanjing, China}
\icmlaffiliation{alibaba}{Alibaba Group, Hangzhou, China}
\icmlaffiliation{hds}{National Institute of Healthcare Data Science, Nanjing University, Nanjing, China}

\icmlcorrespondingauthor{Wei Hu}{whu@nju.edu.cn}

\icmlkeywords{Knowledge Graphs, Entity Alignment, Similarity Flooding, Fixpoint}
\vskip 0.3in
]



\printAffiliationsAndNotice{}  

\begin{abstract}
Joint representation learning over multi-sourced knowledge graphs (KGs) yields transferable and expressive embeddings that improve downstream tasks. Entity alignment (EA) is a critical step in this process. Despite recent considerable research progress in embedding-based EA, how it works remains to be explored. In this paper, we provide a similarity flooding perspective to explain existing translation-based and aggregation-based EA models. We prove that the embedding learning process of these models actually seeks a fixpoint of pairwise similarities between entities. We also provide experimental evidence to support our theoretical analysis. We propose two simple but effective methods inspired by the fixpoint computation in similarity flooding, and demonstrate their effectiveness on benchmark datasets. Our work bridges the gap between recent embedding-based models and the conventional similarity flooding algorithm. It would improve our understanding of and increase our faith in embedding-based EA.
\end{abstract}

\section{Introduction}

A knowledge graph (KG) is a set of relational triplets.
Each triplet is in the form of (\textit{subject entity}, \textit{relation}, \textit{object entity}), denoted by $(s,r,o)$ for short.
A relational triplet indicates a relation between two entities, such as (\textit{ICML 2023}, \textit{hosted in}, \textit{Hawaii}).
Different KGs are created by harvesting various webs of data.
They could cover complementary knowledge from different sources and thus aid in resolving the incompleteness issue of each single KG.
In recent years, representing multi-sourced KGs in a unified embedding space, as illustrated in Figure~\ref{fig:example}, has shown promising potential in promoting knowledge fusion and transfer \cite{LinkNBed}.
It uses entity alignment (EA) between different KGs to jump-start joint and transferable representation learning.
EA refers to the match of identical entities from different KGs, such as ``\textit{ICML}'' and ``\textit{International Conference on Machine Learning}''.
The goal of multi-sourced KG embedding is learning to distinguish between identical and dissimilar entities in different KGs while capturing their respective graph structures.
By aligning the embeddings of identical entities, an entity in one KG can indirectly capture the graph structures of its counterpart in another KG, resulting in more informative representations to benefit downstream tasks.

Therefore, as a fundamental task, embedding-based EA has drawn increasing attention \cite{MTransE,RSN,OpenEA,EA_TKDE,EA_survey,EA_VLDBJ,NeoEA}.
The key of embedding-based EA lies in how to generate entity embeddings for alignment learning.
Existing techniques fall into two groups, translation-based \cite{MTransE,JAPE,TransEdge} and aggregation-based models.
A translation-based model adopts TransE \cite{TransE} or its variants for embedding learning.
Given a triplet $(s,r,o)$,
TransE interprets a relation embedding as the translation vector from the subject entity embedding to the object entity.
Another group of EA models uses graph convolutional networks (GCNs) \cite{GCN} to generate an entity representation by aggregating its neighbor embeddings.

\begin{figure}[t]
\centering
\includegraphics[width=0.81\linewidth]{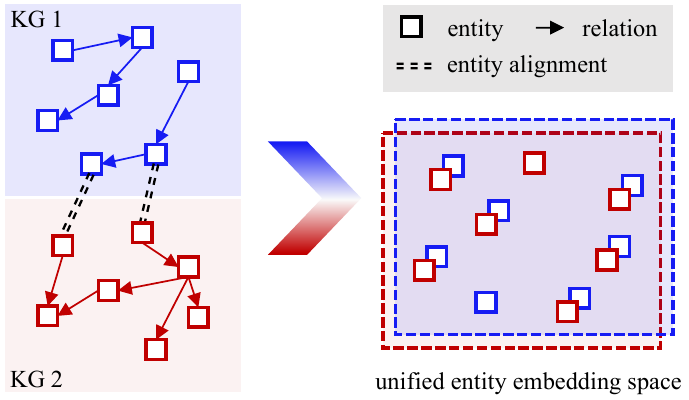}
\caption{Illustration of representing two KGs in a unified space.}
\label{fig:example}
\end{figure}

Despite the considerable technical progress in embedding-based EA, a critical question remains unanswered, 
i.e., \textit{what makes entity embeddings similar in an EA model?}
This question may also cause some researchers to misunderstand and distrust embedding-based EA techniques.
Besides, the connection between embedding-based EA models and traditional symbolic methods remains unexplored.
Under the circumstances, we seek to answer the question. 
We present a similarity flooding (SF) perspective to understand and improve embedding-based EA with both theoretical analysis and experimental evidence.
SF is a widely-used algorithm for matching structured data \cite{Similarity_flooding,Similarity_flooding_report}.
We show that the essence of recent embedding-based EA is also a variant of SF, and learning embeddings is only a means.

Our main contributions are summarized as follows:
\begin{itemize}
\setlength{\itemsep}{1pt}
\setlength{\topsep}{1pt}
\item We present the first theoretical analysis of embedding-based EA techniques to understand how they work.
We provide a similarity flooding perspective to unify the basic translation- and aggregation-based EA models. 
We also build a close connection between embedding-based and traditional symbolic-based EA via the unified perspective of fixpoint computation for entity similarities.
This work would improve our understanding of and increase our faith in embedding-based models.

\item We propose two simple but effective methods based on our theoretical analysis to improve EA. 
The first is a variant of similarity flooding that computes the fixpoint of entity similarities using the entity compositions induced from TransE or GCN. 
This method does not need to learn KG embeddings. 
The second is inspired by the fact that the similarity fixpoint indicates an embedding fixpoint. 
It introduces a self-propagation connection in neighborhood aggregation to let entity embeddings have a chance of propagating back to themselves. 

\item We conduct experiments on DBP15K \cite{JAPE} and OpenEA \cite{OpenEA} to validate the effectiveness of our EA methods and provide experimental evidence to support our theoretical conclusions.
The source code is available at our GitHub repository\footnote{\scriptsize \url{{https://github.com/nju-websoft/Unify-EA-SF}}}.
\end{itemize}

\section{Preliminaries}\label{sect:preliminary}
We first introduce the EA task, and then discuss the basic translation-based and aggregation-based models.
We would like to know how to represent an entity in these models so that we can learn more about what factors influence entity similarities.
Finally, we introduce the SF algorithm.

\subsection{Problem Definition}
Formally, let $\mathcal{X}$ and $\mathcal{Y}$ be the entity sets of the source and target KGs, respectively. 
In the supervised setting, we are given a set of seed entity alignment pairs $\mathcal{A}$ as training data.
For an aligned entity pair $(x, y) \in \mathcal{A}$, where $x\in \mathcal{X}$ and $y\in \mathcal{Y}$,
the KG embeddings for EA are expected to hold:
$
\mathbf{x}=\argmin_{x'\in \mathcal{X}} \pi(\mathbf{x}',\mathbf{y}).
$
Hereafter, we use boldface type to denote vector embeddings, e.g., $\mathbf{x}$ and $\mathbf{y}$ for the embeddings of $x$ and $y$, respectively.
$\pi(\mathbf{x},\mathbf{y})$ is a distance measure. 
In this paper, we consider the Euclidean distance, i.e., $\pi(\mathbf{x},\mathbf{y})=||\mathbf{x}-\mathbf{y}||_2$, where $||\cdot||_2$ denotes the $L_2$ vector norm.
It indicates that, if $x$ and $y$ are aligned entities, $\mathbf{y}$ is expected to be the nearest cross-KG neighbor of $\mathbf{x}$ in the embedding space.
To achieve this goal, given a small set of seed alignment, $\mathcal{A}\subset\{(x,y)|x \equiv y\}$, as training data, the general objective of alignment learning is to minimize the embedding distance of entity pairs in $\mathcal{A}$ \cite{MTransE}:
\begin{equation} 
\label{eq:align_learning}
\min_{(x,y)\in \mathcal{A}} \pi(\mathbf{x},\mathbf{y}).
\end{equation}
Although many models introduce various negative sampling methods \cite{BootEA} to generate dissimilar entity pairs and learn to separate the embeddings of dissimilar entities, Eq.~(\ref{eq:align_learning}) is the most common and indispensable learning objective,
which is our focus in this paper.

\subsection{TransE-based EA} 
The typical learning objective of TransE-based models is to solve two optimization problems, i.e., translational embedding learning and alignment learning, as shown in Eqs.~(\ref{eq:align_learning}) and (\ref{eq:transe}), respectively.
\begin{equation} 
\label{eq:transe}\small
\min_{(s,r,o)\in \mathcal{T}} \|\mathbf{s}+\mathbf{r}-\mathbf{o}\|_2^2, \ \ \ \text{s.t.} \ \ \|\mathbf{e}\|_2^2=1, \, \forall \, e \in \mathcal{X}\cup\mathcal{Y},
\end{equation}
where $\mathcal{T}$ is the set of triplets and $e$ denotes an entity.
Considering that the two optimization problems have a trivial optimal solution with all entities and relations having zero vectors, most EA models normalize each entity embedding to a unit vector.
Therefore, we further introduce a Lagrange term $\lambda_e \sum_{e\in\mathcal{X}\cup\mathcal{Y}} (||\mathbf{e}||_2^2 - 1)$, where $\lambda_e$ is the Lagrange multiplier.
The combined optimization problem is
\begin{equation}
\label{eq:transe_lagrange}
\resizebox{.9\columnwidth}{!}{$
\mathcal{L}(\mathbf{\Theta}) = \smashoperator[r]{\sum_{(s,r,o)\in \mathcal{T}}} \|\mathbf{s}+\mathbf{r}-\mathbf{o}\|_2^2 + \smashoperator[r]{\sum_{(x,y)\in \mathcal{A}}} \|\mathbf{x}-\mathbf{y}\|_2^2 + \lambda_e \smashoperator[r]{\sum_{e\in\mathcal{X}\cup\mathcal{Y}}} (||\mathbf{e}||_2^2 - 1),
$}
\end{equation}
where $\mathbf{\Theta}$ denotes the entity and relation embeddings.
The optimization problem then shifts to solving the following equation:
$\bigtriangledown_{\mathbf{\Theta},\lambda_e} \mathcal{L}(\mathbf{\Theta}) = \mathbf{0}$.
Then, we can derive the representations of relations and entities in the model.

\noindent\textbf{Deriving relation representations.}
We first consider relation embeddings and take the relation $r$ as an example.
We are interested in the gradients of the loss in Eq.~(\ref{eq:transe_lagrange}) with respect to $r$: $\bigtriangledown_{\mathbf{r}} \mathcal{L}(\mathbf{\Theta}) = \bigtriangledown_{\mathbf{r}} \sum_{(s,r,o)\in \mathcal{T}_r} \|\mathbf{s}+\mathbf{r}-\mathbf{o}\|_2^2$,
where $\mathcal{T}_r$ denotes the set of triplets involving $r$.
Letting the above derivative be zero, we can derive
\begin{equation}
\label{eq:r}\small
\mathbf{r} = \frac{1}{|\mathcal{T}_r|} \sum_{(s,r,o)\in \mathcal{T}_r} (\mathbf{o} - \mathbf{s}).
\end{equation}
The equation aligns with the motivation of TransE that represents a relation as the translation vector between its subject and object entity embeddings.
Given this equation, we can use the final entity embeddings to represent a relation.

\noindent\textbf{Deriving entity representations.}\label{sect:entity_representation}
An entity may appear as the subject or object in a triplet.
To simplify the formulations without information loss, we introduce reverse triplets following the convention in KG embedding models~\cite{RSN}.
For each triplet $(s,r,o)$, we add a new triplet $(o,r^{-1},s)$, where $r^{-1}$ denotes the reverse relation of $r$.
In this way, we only need to consider the outgoing edges of an entity, i.e., the triplets with the given entity as the subject.
The original ingoing edges are considered by including their reverse edges.
We use $\mathcal{T}_e$ to denote the triplets with $e$ as the subject.
Specifically, given entity $e$, we are interested in the gradients of the loss in Eq.~(\ref{eq:transe_lagrange}) with respect to embedding $\mathbf{e}$, i.e.,
$\bigtriangledown_{\mathbf{e}} \mathcal{L}(\mathbf{\Theta}) =\bigtriangledown_{\mathbf{e}}\sum_{(e,r,o)\in \mathcal{T}_e} \|\mathbf{e}+\mathbf{r}-\mathbf{o}\|_2^2 + \mathbbm{1}_{\exists (e,\hat{e})\in\mathcal{A}} \bigtriangledown_{\mathbf{e}} \,\|\mathbf{e}-\hat{\mathbf{e}}\|_2^2 + \lambda_e \bigtriangledown_{\mathbf{e}} (||\mathbf{e}||_2^2 - 1)$,
where $\mathbbm{1}$ is an indicator function.
By setting the gradients to be zero vectors, we obtain $\mathbf{e} = \frac{1}{|\mathcal{T}_e|+\lambda_e} \sum_{(e,r,o)\in \mathcal{T}_e} (\mathbf{o} - \mathbf{r}) + \mathbbm{1}_{\exists (e,\hat{e})\in\mathcal{A}} (\mathbf{e}-\hat{\mathbf{e}})$.
With proper EA training strategies, e.g., parameter sharing \cite{OpenEA}, $e$ and $\hat{e}$ would have the same embeddings, i.e., $\mathbf{e}-\hat{\mathbf{e}}=\mathbf{0}$.
In addition, 
we can apply normalization to $\mathbf{e}$ to ensure $\| \mathbf{e}\|=1$, and then we can 
replace $|\mathcal{T}_e|+\lambda_e$ with $|\mathcal{T}_e|$.
Thus, we obtain $\mathbf{e} = \frac{1}{|\mathcal{T}_e|} \sum_{(e,r,o)\in \mathcal{T}_e} (\mathbf{o} - \mathbf{r})$.
Note that, in this equation, we still need relation embeddings to represent an entity.
To get free of relation embeddings, we can replace them with the composition of related subject and object entity embeddings as shown in Eq.~(\ref{eq:r}), and get
\begin{equation}\label{eq:transe_ent_embed_final}\small
\mathbf{e} = \frac{1}{|\mathcal{T}_e|} \sum_{(e,r,o)\in \mathcal{T}_e} \Big(\mathbf{o} - \frac{1}{|\mathcal{T}_r|} \sum_{(s',r,o')\in \mathcal{T}_r} (\mathbf{o}' - \mathbf{s}')\Big) .
\end{equation}
In this way, we represent an entity by the composition of its related entities in the same KG.

\subsection{GCN-based EA} 
In a CGN-based EA method, an entity is first represented by aggregating its neighbors. 
For brevity, we consider a one-layer GCN layer \cite{GCN} with mean-pooling as the aggregation function, i.e., $G(x)=\frac{1}{|N(x)|}\sum_{x'\in N(x)}\mathbf{x}'$.
The entity representation in GCNs is:
\begin{equation}\label{eq:gcn_e}\small
\mathbf{e}=\frac{1}{|N(e)|}\sum_{e'\in N(e)}\mathbf{e}'.
\end{equation}
Then, given the output representations, we minimize the embedding distance of identical entities in seed entity alignment for alignment learning, as shown in Eq.~(\ref{eq:align_learning}).
Finally, we use $k$NN search to find the counterpart for a given entity.

\subsection{Similarity Flooding}

Similarity flooding \cite{Similarity_flooding} is an iterative graph matching technique based on fixpoint computation.
It is a fundamental algorithm and widely used in a variety of graph matching contexts, such as ontology mapping and database schema matching \cite{ShvaikoE13}.
Given two input graphs $G_1$ and $G_2$ with the aim of finding the mapping of identical nodes, the similarity flooding algorithm first creates a pairwise connectivity graph (PCG), which is an auxiliary data structure for similarity propagation.
As shown in Figure~\ref{fig:pcg}, in a PCG, a node is an entity pair $(x_1,y_1)$ with the similarity of $\sigma(x_1,y_1)$ (called a mapping pair), where the two entities are from the two graphs, respectively, i.e., $x_1\in G_1$ and $y_1\in G_2$.
An edge $\big((x_1,y_1), r_1, (x_2,y_2)\big)$ of the PCG is induced from the two graphs having $(x_1,r_1,x_2)\in G_1$ and $(y_1,r_1,y_2)\in G_2$.
The relation $r_1$ would be further given a weight, called the propagation coefficient, which ranges from $0$ to $1$ and can be computed in different ways \cite{Similarity_flooding_report}.
The directed weighted edge $\big((x_1,y_1), r_1, (x_2,y_2)\big)$ indicates how well the similarity of $(x_1,y_1)$ propagates to its neighbor $(x_2,y_2)$.
Then, the algorithm propagates the similarity of each node (i.e., mapping pair) over the PCG using fixpoint computation and finally outputs the node mappings.
The fixpoint formula for similarity flooding is
\begin{equation}\label{eq:fixpoint}
\Omega = \texttt{normalize}\big(\Omega_0 + \Omega + \varphi(\Omega_0 + \Omega)\big),
\end{equation}
where $\Omega_0$ is the node similarity matrix,
and $\varphi$ is the propagation function.
In conventional graph matching methods, $\Omega_0$ can be computed by string matching.
In our work, we follow the supervised setting of embedding-based EA, and use seed entity alignment to initialize $\Omega_0$.

\begin{figure}[t]
\centering
\includegraphics[width=0.87\linewidth]{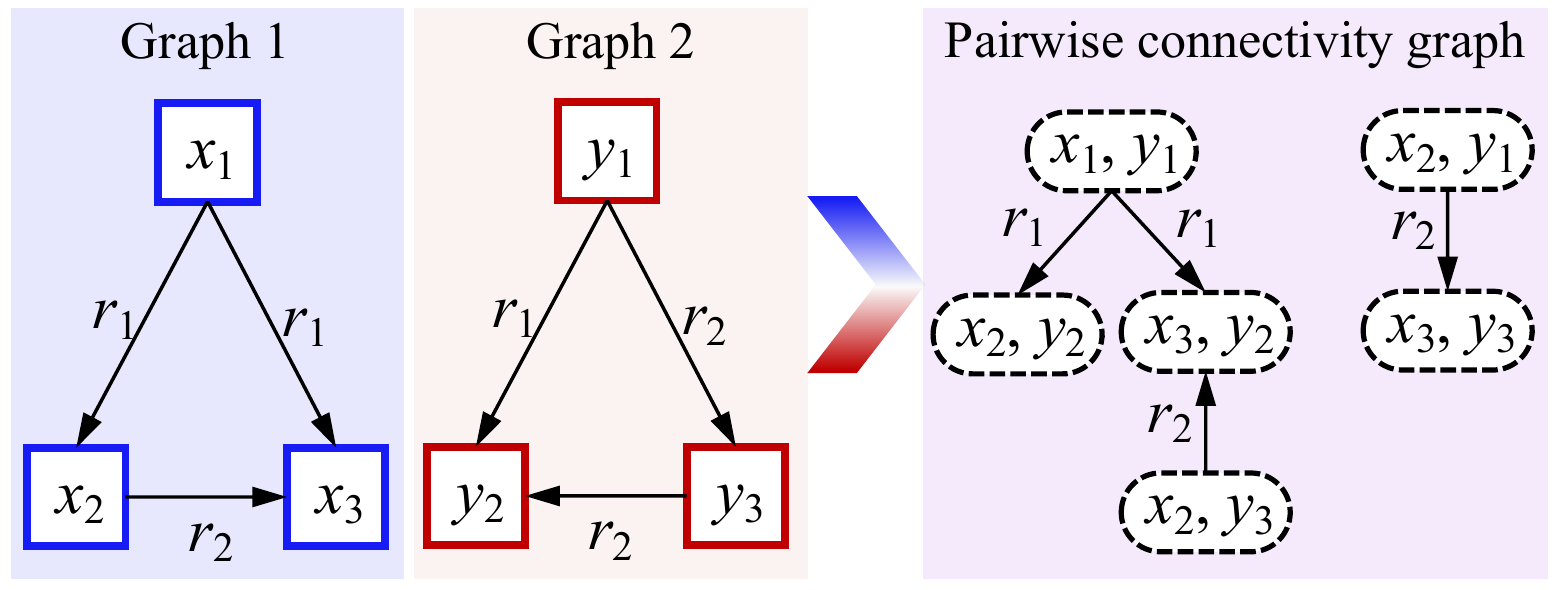}
\caption{Illustration of how to build the pairwise connectivity graph given two graphs (redrawn based on \cite{Similarity_flooding}).}
\label{fig:pcg}
\end{figure}

\begin{remark}
The pairwise connectivity graph construction requires the alignment of edge labels in the two graphs.
\end{remark}

\begin{remark}
The propagation coefficients of edges in the pairwise connectivity graph are computed heuristically.
\end{remark}

\section{Connecting Embedding-based EA and SF}\label{sect:connct_ea_sf}
Given the derived entity representations from TransE or GCN, we can compute entity similarities.
Specifically, given two entity sets, $\mathcal{X}=\{x_1, x_2, \dots, x_n\}$ and $\mathcal{Y}=\{y_1, y_2, \dots, y_m\}$,
we denote the derived entity representations by $\{\mathbf{x}_1, \mathbf{x}_2, \dots, \mathbf{x}_n\}$ and $\{\mathbf{y}_1, \mathbf{y}_2, \dots, \mathbf{y}_m\}$, respectively.
Their pairwise similarity matrix is
\begin{equation}\label{eq:sim_mat}
\Omega = (\mathbf{x}_1;\mathbf{x}_2;\dots;\mathbf{x}_n)^\top(\mathbf{y}_1;\mathbf{y}_2; \dots;\mathbf{y}_m)\in \mathbb{R}^{n\times m}.
\end{equation}
The similarity matrix determines entity alignment pairs.

\subsection{Unifying TransE- and GCN-based EA}

\begin{theorem} \label{th:theorem}
The TransE-based EA model seeks a fixpoint of pairwise entity similarities via embedding learning.
\end{theorem}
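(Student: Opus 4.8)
The plan is to convert the stationarity conditions of the TransE objective into a linear self-consistency relation among the entity embeddings of each KG, and then show that this relation forces the pairwise similarity matrix $\Omega$ to be invariant under a similarity-flooding-style propagation map. The starting point is already available: Eq.~(\ref{eq:transe_ent_embed_final}), obtained by setting $\bigtriangledown_{\mathbf{\Theta},\lambda_e}\mathcal{L}=\mathbf{0}$, expresses each entity embedding as a fixed linear combination of the embeddings of \emph{other entities in the same KG}, where the coefficients depend only on the triplet structure (the sets $\mathcal{T}_e$ and $\mathcal{T}_r$) and not on the embeddings themselves.

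First I would collect these coefficients into matrix form. Writing the $n$ source embeddings as columns of a $d\times n$ matrix $X$, Eq.~(\ref{eq:transe_ent_embed_final}) reads $\mathbf{x}_i=\sum_j A_{ij}\mathbf{x}_j$, i.e.\ $X = X A^\top$, where $A\in\mathbb{R}^{n\times n}$ is the data-dependent composition matrix whose entry $A_{ij}$ is the net coefficient of $\mathbf{x}_j$ in the expansion of $\mathbf{x}_i$. The same construction on the target KG gives $Y = Y B^\top$ for a matrix $B\in\mathbb{R}^{m\times m}$. Here the parameter-sharing assumption that kills the alignment term, $\mathbf{e}-\hat{\mathbf{e}}=\mathbf{0}$, together with the unit-norm normalization that lets us replace $|\mathcal{T}_e|+\lambda_e$ by $|\mathcal{T}_e|$, are exactly what make these homogeneous linear relations clean.

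Next I would substitute into the similarity matrix of Eq.~(\ref{eq:sim_mat}), $\Omega = X^\top Y$. From $X^\top = A X^\top$ and $Y = Y B^\top$ we obtain
\[
\Omega \;=\; X^\top Y \;=\; A\, X^\top Y\, B^\top \;=\; A\,\Omega\,B^\top,
\]
so $\Omega$ is a fixpoint of the linear propagation map $\Phi(\Omega)=A\Omega B^\top$; equivalently, $\operatorname{vec}(\Omega)=(B\otimes A)\operatorname{vec}(\Omega)$, i.e.\ $\operatorname{vec}(\Omega)$ is a unit-eigenvalue eigenvector of $B\otimes A$. To complete the bridge to Eq.~(\ref{eq:fixpoint}), I would identify $B\otimes A$, acting on mapping pairs $(x_i,y_j)$, with the propagation function $\varphi$: an entry of $B\otimes A$ indexed by $\big((x_i,y_j),(x_k,y_l)\big)$ is nonzero precisely when $x_i$ reaches $x_k$ and $y_j$ reaches $y_l$ through a shared relation, which is exactly the edge condition $(x_i,r,x_k)\in G_1$ and $(y_j,r,y_l)\in G_2$ that defines the PCG, while the seed alignment $\mathcal{A}$ supplies the initialization $\Omega_0$.

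The main obstacle I anticipate is the bookkeeping in the matricization step. Eq.~(\ref{eq:transe_ent_embed_final}) is a double sum, over $\mathcal{T}_{x_i}$ and over each relation's triplet set $\mathcal{T}_r$, so extracting the net coefficient of a given $\mathbf{x}_j$ requires carefully accounting for the reverse triplets $(o,r^{-1},s)$ introduced earlier and for entities that appear both as objects of $x_i$'s edges and inside the relation-averaging term. Verifying that the resulting support of $A$ (and of $B\otimes A$) coincides with the shared-relation adjacency of the PCG is the crux that upgrades the bare algebraic identity $\Omega=A\Omega B^\top$ into a genuine similarity-flooding interpretation, rather than a generic linear fixpoint.
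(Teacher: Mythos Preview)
Your proposal is correct and follows essentially the same route as the paper: express each entity embedding as a fixed linear combination of same-KG entity embeddings (your $A,B$ are the paper's $\Lambda,\Lambda'$), then substitute into $\Omega=X^\top Y$ to obtain the fixpoint identity $\Omega=A\Omega B^\top$, which is exactly the paper's Eq.~(\ref{eq:sim_fixpoint}). The paper stops at that identity and defers the explicit computation of the $\lambda$-coefficients to a separate paragraph, so your additional Kronecker/PCG discussion and the bookkeeping worry you flag go beyond what the paper's proof of Theorem~\ref{th:theorem} actually carries out.
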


\begin{proof}
Eq.~(\ref{eq:transe_ent_embed_final}) shows that we can represent an entity with a composition of other entities.
Thus, we can represent an entity (taking $x_i\in \mathcal{X}$ for example) as
\begin{equation}\label{eq:ent}
\mathbf{x}_i = \lambda_{i,1} \mathbf{x}_1 + \lambda_{i,2} \mathbf{x}_2 + \cdots + \lambda_{i,n} \mathbf{x}_n = \sum_{k=1}^n \lambda_{i,k} \mathbf{x}_k,
\end{equation}
where $\lambda_{i,k}$ denotes the composition coefficient of entities $x_i$ and $x_k$, which can be computed from Eq.~(\ref{eq:transe_ent_embed_final}). 
Then, the similarity of entities $x_i\in \mathcal{X}$ and $y_j\in \mathcal{Y}$ can be calculated using the inner product\footnote{\scriptsize{The inner product of two normalized vectors is equal to their cosine similarity.}} as follows:
\begin{equation}\label{eq:sim_xy}\small
\resizebox{.99\columnwidth}{!}{$
\omega_{i,j} = \mathbf{x}_i \cdot  \mathbf{y}_j =  \sum_{k=1}^n \sum_{l=1}^m  \lambda_{i,k} \lambda'_{j,l} \, \mathbf{x}_k  \cdot \mathbf{y}_l =\sum_{k=1}^n \sum_{l=1}^m  \lambda_{i,k} \lambda'_{j,l}  \omega_{k,l}.
$}
\end{equation}
We can see from the above equation how the similarity of two entities affects their related neighbors.
Let the matrix $\Lambda=(\lambda_{i,j})_{i=1,j=1}^{n,n}$ consist of the lambda values for the source KG, and $\Lambda'=(\lambda'_{i,j})_{i=1,j=1}^{m,m}$ for the target KG.
Let $\Omega=(\omega_{i,j})_{i=1,j=1}^{n,m}$ denote the pairwise entity similarities of the two KGs.
We can rewrite Eq.~(\ref{eq:sim_xy}) as 
\begin{equation}\label{eq:sim_fixpoint}
\Lambda\Omega(\Lambda')^{\top}= \Omega,
\end{equation}
where $(\Lambda')^\top$ is the transposed matrix of $\Lambda'$.
This equation shows that the entity embedding similarities learned by the translation-based model have a fixpoint of $\Omega$.
\end{proof}

Next, we can connect aggregation-based EA models and similarity flooding in a similar way.
\begin{theorem} \label{th:theorem_gcn}
The GCN-based EA model seeks a fixpoint of pairwise entity similarities via embedding learning.
\end{theorem}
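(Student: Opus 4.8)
The plan is to mirror the proof of \Cref{th:theorem} almost verbatim, exploiting the fact that the GCN representation in \Cref{eq:gcn_e} already expresses each entity embedding directly as a linear combination of its same-KG neighbors, so no gradient derivation is needed to reach the composition form of \Cref{eq:ent}. First I would read off the composition coefficients straight from the mean-pooling aggregation: set $\lambda_{i,k}=\frac{1}{|N(x_i)|}$ when $x_k\in N(x_i)$ and $\lambda_{i,k}=0$ otherwise, so that $\mathbf{x}_i=\sum_{k=1}^n\lambda_{i,k}\mathbf{x}_k$, and analogously define $\lambda'_{j,l}$ for the target KG. The resulting matrix $\Lambda=(\lambda_{i,k})$ is exactly the row-normalized adjacency (mean-aggregation) matrix of the source KG, with $\Lambda'$ that of the target KG; both are row-stochastic.

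With the two entities written in this composed form, the second step is to expand their inner product by bilinearity exactly as in \Cref{eq:sim_xy}, obtaining $\omega_{i,j}=\mathbf{x}_i\cdot\mathbf{y}_j=\sum_{k=1}^n\sum_{l=1}^m\lambda_{i,k}\lambda'_{j,l}\,\omega_{k,l}$, and then to collect these scalar identities into a single matrix equation. This yields $\Lambda\Omega(\Lambda')^{\top}=\Omega$, i.e.\ precisely the fixpoint \Cref{eq:sim_fixpoint} established for the translation-based case, so $\Omega$ is again a fixpoint of the similarity-propagation map induced by $\Lambda$ and $\Lambda'$.

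The main obstacle is conceptual rather than computational: I must justify that \Cref{eq:gcn_e} should be read as a self-consistency (fixpoint) condition rather than as a single forward pass, since the argument needs the embeddings appearing on the right-hand side to coincide with the representation on the left. In the TransE case this was automatic, as \Cref{eq:transe_ent_embed_final} arose from setting the training gradients to zero at the optimum. For GCN I would argue that the learned embeddings are, by construction, a fixpoint of the mean-aggregation operator, so that each entity embedding equals the average of its neighbors' embeddings once representation learning stabilizes, making the substitution that closes the bilinear expansion legitimate. I would close by remarking that the only structural difference from \Cref{th:theorem} is the form of the coefficient matrix ($\Lambda$ is the row-stochastic aggregation matrix here, versus the translation-induced composition there), so the fixpoint conclusion transfers without further work.
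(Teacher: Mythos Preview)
Your proposal is correct and follows precisely the paper's own approach: the paper's proof of \Cref{th:theorem_gcn} literally consists of the single line ``Please refer to the proof of Theorem~\ref{th:theorem}. The difference lies in how to compute the lambda values,'' followed by the GCN lambda formula $\lambda_{i,j}=\mathbbm{1}_{(x_i,r,x_j)\in\mathcal{T}}/|\mathcal{T}_{x_i}|$, which is exactly your row-normalized adjacency. If anything, your discussion of why \Cref{eq:gcn_e} should be read as a self-consistency condition is more careful than the paper, which simply takes that reading for granted.
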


\begin{proof}
Please refer to the proof of Theorem~\ref{th:theorem}. The difference lies in how to compute the lambda values.
\end{proof}

We show how to calculate lambda values below.

\noindent\textbf{Lambda values for TransE.}
The lambda values for TransE are computed by counting the number of related triples:
\begin{equation}
\resizebox{.89\columnwidth}{!}{$
\lambda_{i,j} = \frac{1}{|\mathcal{T}_{x_i}|} \Big(|R(x_i,x_j)| + \sum_{r\in R}\frac{|\mathcal{T}_{x_i,r}|}{|\mathcal{T}_{r}|} \big(|\mathcal{T}_{x_j,r}| - |\mathcal{T}_{x_j,r^{-1}}|\big) \Big),
$}
\end{equation}
where $R(x_i,x_j)$ denotes the set of relations that connect entities $x_i$ and $x_j$, and $R$ is the set of all relations.
$\mathcal{T}_{x_i,r}$ denotes the set of relation triplets with $x_i$ as the subject and $r$ as the relation.
$\mathcal{T}_{r}$ is the set of relation triplets with $r$ as the relation.
$r^{-1}$ is the reverse relation for $r$.

\noindent\textbf{Lambda values for GCN.}
For GCN, we have
\begin{equation}
\lambda_{i,j} = \frac{\mathbbm{1}_{(x_i, r, x_j)\in \mathcal{T}}}{|\mathcal{T}_{x_i}|},
\end{equation}
where $\mathbbm{1}$ is an indicator function that returns 1 if there is a relation between $x_i$ and $x_j$, and 0 otherwise.

\begin{remark}
Embedding learning is just a means and the objective is to seek a fixpoint of pairwise entity similarities.
\end{remark}

\subsection{An Interpretation of Embedding-based EA}
Given the fixpoint view of EA, we further discover a mathematical interpretation of TransE- and GCN-based models.
We show that identical entities have isomorphic structures in the entity compositions of embedding-based EA.

\begin{theorem}\label{thm:mapping}
The entity alignment pairs found by the above embedding-based models yield a function $f: \{1,2,\dots,n\}\rightarrow \{0,1,2,\dots,m\}$, such that $\forall i,j, f(i)> 0\wedge f(j)>0\rightarrow\lambda'_{f(i),f(j)} \approx \lambda_{i,j}$.
\end{theorem}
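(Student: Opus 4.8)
The plan is to combine the linear composition identity from Eq.~(\ref{eq:ent}) with the defining near-equality of aligned embeddings, and then to match coefficients. First I would read off the map $f$ from the learned similarity matrix $\Omega$: for each source index $i$, set $f(i)$ to be the index $j$ attaining $\max_j\omega_{i,j}$ whenever this maximal similarity clears a fixed threshold, and $f(i)=0$ otherwise. By the alignment objective in Eq.~(\ref{eq:align_learning}), which drives the embeddings of identical entities to coincide, every matched pair satisfies $\mathbf{x}_i\approx\mathbf{y}_{f(i)}$.

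The core step is to expand both sides of this near-equality via the composition identity and compare them termwise. On the source side Eq.~(\ref{eq:ent}) gives $\mathbf{x}_i=\sum_{k=1}^n\lambda_{i,k}\mathbf{x}_k$, while the analogous target-side identity gives $\mathbf{y}_{f(i)}=\sum_{l=1}^m\lambda'_{f(i),l}\mathbf{y}_l$. Substituting $\mathbf{x}_k\approx\mathbf{y}_{f(k)}$ into the source composition over the aligned indices and invoking $\mathbf{x}_i\approx\mathbf{y}_{f(i)}$, I arrive at
\[
\sum_{l=1}^m\lambda'_{f(i),l}\,\mathbf{y}_l\;\approx\;\sum_{k:\,f(k)>0}\lambda_{i,k}\,\mathbf{y}_{f(k)}.
\]
Treating the alignment as one-to-one, so that $f$ is injective on its support, I reindex the right-hand sum by $l=f(k)$ and equate the coefficient of each $\mathbf{y}_l$. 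Choosing $l=f(j)$ with $f(j)>0$ isolates exactly the claimed relation $\lambda'_{f(i),f(j)}\approx\lambda_{i,j}$.

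The hard part will be legitimizing the termwise comparison, which needs the target embeddings $\{\mathbf{y}_l\}$ to be (approximately) linearly independent so that a vector's representation in their span is unique; in the high-dimensional embedding space this holds generically, but it is an assumption worth stating. A related subtlety is the treatment of unaligned source entities, those $k$ with $f(k)=0$, whose composition terms have no target counterpart: I would either assume the learned alignment covers the relevant neighborhoods or fold these leftover terms into the approximation error already signaled by ``$\approx$'' in the statement. Once injectivity of $f$ and this near-independence are granted, they are precisely what convert the approximate embedding identity into the approximate equality of the source and target composition matrices restricted to matched indices, i.e.\ the asserted isomorphism of entity-composition structure.
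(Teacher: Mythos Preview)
Your route is genuinely different from the paper's, and it contains a gap at exactly the point you flag as ``the hard part.'' You want to match coefficients in
\[
\sum_{l=1}^m\lambda'_{f(i),l}\,\mathbf{y}_l\;\approx\;\sum_{k:\,f(k)>0}\lambda_{i,k}\,\mathbf{y}_{f(k)},
\]
and for that you assume the target embeddings $\{\mathbf{y}_l\}$ are (approximately) linearly independent. But the very composition identity you invoke, $\mathbf{y}_l=\sum_p\lambda'_{l,p}\mathbf{y}_p$, already furnishes $m$ nontrivial linear relations among the $\mathbf{y}_l$ (equivalently, $(\mathbf{I}-\Lambda')Y=0$), so this independence fails by construction rather than ``holding generically.'' Two distinct coefficient vectors can then represent the same target vector, and your termwise comparison does not go through. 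The residual from unmatched indices compounds this: on the left you sum over all $l$, on the right only over the image of $f$, so equating the $\mathbf{y}_{f(j)}$ coefficient also silently assumes $\lambda'_{f(i),l}\approx 0$ for every unmatched $l$, which you have not argued.

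The paper sidesteps both issues by never working at the embedding-vector level. It stays with the similarity fixpoint $\Lambda\Omega(\Lambda')^\top\approx\Omega$ from Theorem~\ref{th:theorem}, encodes the alignment as a $0$--$1$ matrix $\hat{\Omega}$, and uses a self-alignment instance ($\mathcal{Y}$ against itself, with $\Omega=\mathbf{I}_m$) to obtain the cancellation identity $\Lambda'(\Lambda')^\top\approx\mathbf{I}_m$. Left-multiplying the fixpoint relation by $\hat{\Omega}^\top$ and then cancelling $(\Lambda')^\top$ via this identity yields $\hat{\Omega}^\top\Lambda\hat{\Omega}\approx\hat{\mathbf{I}}_m\Lambda'$, whose $(f(i),f(j))$ entry is exactly $\lambda_{i,j}\approx\lambda'_{f(i),f(j)}$. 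The crucial difference is that the paper's ``uniqueness'' step is the near-orthogonality of $\Lambda'$, which is \emph{derived} from the fixpoint framework, whereas your uniqueness step (linear independence of embeddings) is \emph{contradicted} by it. If you want to rescue your line, you would need to replace coefficient matching by an argument that the difference of the two coefficient vectors lies only in the null directions generated by $\mathbf{I}-\Lambda'$, and then show those null directions do not affect the coordinates $f(j)$ you care about; that is essentially what the paper's orthogonality identity is doing in disguise.
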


\begin{proof}
Let us consider aligning $\mathcal{Y}$ with itself. 
We have
\begin{equation}
\Lambda' \mathbf{I}_m (\Lambda')^\top \approx \mathbf{I}_m,
\end{equation}
where $\mathbf{I}_m$ is an identity matrix.
Suppose that the alignment found by the above embedding-based models is $\hat{\mathcal{A}}$, which can be denoted by a 0-1 matrix $\hat{\Omega}$ such that $\hat{\omega}_{i,j} = 1$ if and only if $(x_i,x_j)\in\hat{\mathcal{A}}$.
Similar to most EA settings, we assume that in $\hat{\mathcal{A}}$, each entity is aligned to at most one entity in another KG.
Notice that $\hat{\Omega}$ approximately equals a fixpoint of Eq. (\ref{eq:fixpoint}).
Thus, we have 
\begin{align}
\hat{\Omega}^\top \Lambda \hat{\Omega} (\Lambda')^\top\approx \hat{\Omega}^\top \hat{\Omega}=\hat{\mathbf{I}}_m,
\end{align}
where $\hat{\mathbf{I}}_m$ is a diagonal matrix, where $\hat{\mathbf{I}}_{j,j}=1$ if and only if $y_j$ appears in one pair in $\hat{\mathcal{A}}$.
As $\Lambda' \mathbf{I}_m (\Lambda')^\top \approx \mathbf{I}_m$, we have $\hat{\Omega}^\top \Lambda \hat{\Omega}\approx \hat{\mathbf{I}}_m \Lambda'$.
Let $f$ be a function defined as 
\begin{equation}
f(i) = \begin{cases}
j, & (x_i,y_j) \in \hat{\mathcal{A}} \\
0, & \forall y_j\in\mathcal{Y}, (x_i,y_j)\not\in \mathcal{A}
\end{cases}.
\end{equation}
When $f(i) > 0$ and $f(j) > 0$, we have $(\hat{\Omega}^\top \Lambda \hat{\Omega})_{f(i),f(j)}=\lambda_{i,j}$, i.e., $\lambda'_{f(i),f(j)}\approx\lambda_{i,j}$.
\end{proof}

Based on \autoref{thm:mapping}, we find that for each KG, the entity compositions derived from EA models generate a matrix (e.g., $\Lambda$) that only depends on graph structures.
It finds a mapping function that makes the two KGs' matrices the same and this function determines the alignment results.
Although different KGs may have heterogeneous structures, 
the entity compositions in embedding-based EA models reconstruct a new structure (represented by $\Lambda$), in which aligned entities have isomorphic subgraphs.

\begin{remark}
If we view these matrices as edge weights between nodes in KGs, these embedding-based EA models mathematically conduct graph matching.
\end{remark}

\section{Experimental Evidence}\label{sect:exp1}
In this section, we propose two methods to improve EA: similarity flooding via entity compositions and self-propagation in GCNs.
We evaluate them on benchmark datasets, providing experimental evidence to support our theorem.

\subsection{Similarity Flooding via Entity Compositions}\label{sect:embed_sf}
We have shown by Eqs.~(\ref{eq:transe_ent_embed_final}) and (\ref{eq:gcn_e}) that the entity representations derived from TransE- and GCN-based models can be reformulated to be independent from relations.
Our theorems in Section~\ref{sect:connct_ea_sf} show that entity similarities are determined by other entity similarities, and entity embeddings are unnecessary in this computation.
Then, a natural question arises: \textit{Is the embedding learning process a prerequisite for achieving the fixpoint of entity similarities}?

Given Eq.~(\ref{eq:sim_fixpoint}), we design a similarity flooding style algorithm to propagate the entity similarities that are computed based on the entity composition representations induced from an embedding model.
It is presented in Algorithm~\ref{alg:embed_sf}.
We first derive the entity compositions from the embedding model.
Then, we calculate the lambda values $\Lambda$ and $\Lambda'$ in the compositions.
We initialize the similarity matrix $\Omega$ to be a zero matrix and set the values that indicate seed EA similarities to be 1.
The similarity matrix is further updated to achieve the fixpoint as shown in Eq.~(\ref{eq:sim_fixpoint}). 
After each update, we normalize the values to range from $-1$ to $1$.
The computation is performed in an iterative manner until it converges or reaches the maximum number of iterations.

\begin{algorithm}[t]\small
\caption{Similarity flooding via entity compositions}
\label{alg:embed_sf}
\begin{algorithmic}
\STATE {\bfseries Input:} {$KG_1$, $KG_2$, seed entity alignment $\mathcal{A}$, the maximum number of iterations  $T$, a small threshold value $\epsilon$ for algorithm termination, an embedding model $\mathcal{M}$.}
\WHILE {true}
\STATE Derive the entity compositions from M;
\STATE Compute lambda matrices $\Lambda$ and $\Lambda'$ in the compositions;
\STATE $\Omega_0 \leftarrow (0)_{i=1,j=1}^{n,m}$;
\FOR{$i,j\in \mathcal{A}$}
\STATE ${\Omega_0}_{\,i,j} \leftarrow 1$;
\ENDFOR
\FOR{$t=1,2,\dots,T$}
\STATE $\Omega_t  \leftarrow \texttt{normalize}\big(\Lambda\Omega_{t-1}(\Lambda')^\top\big)$; 
\ENDFOR
\IF{$\bigtriangleup(\Omega_t, \Omega_{t-1}) < \epsilon$}
\STATE \textbf{break}\;
\ENDIF
\ENDWHILE
\end{algorithmic}
\end{algorithm}

We hereby discuss the advantages of the algorithm.
First, it does not need relation alignment.
It represents an entity without using relations and does not need to build the PCG.
As different KGs usually have heterogeneous schemata, it is difficult to obtain the accurate relation alignment.
By contrast, the conventional similarity flooding algorithm relies on relation alignment to build the PCG.
Second, our algorithm does not need to compute propagation coefficients for similarity flooding. 
The lambda values act as ``propagation coefficients'', but they are calculated by counting the number of related triplets without using heuristic methods.
Third, our algorithm does not need to learn embeddings, but it needs an embedding model to derive the entity compositions.
Our optimization objective is to directly achieve the fixpoint of entity similarities.
Embedding-based models seek this goal by an indirect way of updating embeddings.

Our algorithm has the disadvantage of requiring matrix manipulation.
If the KG scale is large, it would consume a lot of memory. 
We can solve this problem by using advanced and parallel matrix manipulation implementations.

\begin{table*}[!t]
\centering
\caption{EA results on DBP15K as well as OpenEA D-W and D-Y. The best scores in each group are marked in bold. The results of MTransE are taken from \cite{JAPE}. The results of GCN-Align are taken from its paper. ``-'' denotes their unreported metrics.}
\vskip 0.15in
\resizebox{0.999\textwidth}{!}{
\begin{tabular}{lccccccccccccccc}
\toprule
\multirow{2}{*}{Models} & \multicolumn{3}{c}{DBP15K ZH-EN} & \multicolumn{3}{c}{DBP15K JA-EN} & \multicolumn{3}{c}{DBP15K FR-EN} & \multicolumn{3}{c}{{OpenEA D-W 15K}} & \multicolumn{3}{c}{{OpenEA D-Y 15K}} \\
\cmidrule(lr){2-4} \cmidrule(lr){5-7} \cmidrule(lr){8-10} \cmidrule(lr){11-13} \cmidrule(lr){14-16}
& Hits@1 & Hits@10 & MRR & Hits@1 & Hits@10 & MRR & Hits@1 & Hits@10 & MRR & {Hits@1} & {Hits@10} & {MRR} & {Hits@1} & {Hits@10} & {MRR} \\ 
\midrule
MTransE & 0.308 & 0.614 & - & 0.279 & 0.575 & - & 0.244 & 0.556 & - & {0.259} & {-} & {0.354} & {0.463} & {-} & {0.559}\\
\transmodelname (ours) & \textbf{0.315} & \textbf{0.707} & 0.451 & \textbf{0.372} & \textbf{0.757} & 0.505 & \textbf{0.347} & \textbf{0.752} & 0.484 & {\textbf{0.294}} & {0.699} & {\textbf{0.427}} & {\textbf{0.503}} & {0.880} & {\textbf{0.641}}\\
\midrule
GCN-Align & \textbf{0.413} & 0.744 & - & \textbf{0.399} & 0.745 & - & \textbf{0.373} & 0.745 & - & {\textbf{0.364}} & {-} & {0.461} & {0.465} & {-} & {0.536}\\
\gcnfloodmodelname (ours) &  0.349 & \textbf{0.761} & 0.490 & 0.376 & \textbf{0.770} & 0.512 & 0.349 & \textbf{0.761} & 0.490 & {0.358} & {0.739} & {\textbf{0.486}} & {\textbf{0.478}} & {0.754} & {\textbf{0.583}} \\
\bottomrule
\end{tabular}
}
\label{tab:entity_align_results_sf}
\end{table*}

\subsubsection{Evaluation}\label{sect:exp2}
We implement two variants of our algorithm, namely \transmodelname and \gcnfloodmodelname, using TransE and GCN, respectively.

\noindent\textbf{Baselines.}\label{sect:sf_baseline}
We choose the translation-based model MTransE and aggregation-based model GCN-Align as baselines.
\begin{itemize}
\setlength{\itemsep}{1pt}
\item \textbf{MTransE} \cite{MTransE} is one of the earliest studies that explore translational embeddings for EA. It uses TransE \cite{TransE} to learn the entity embeddings of two KGs meanwhile learning a linear mapping to find identical entities. 
\item \textbf{GCN-Align} \cite{GCNAlign} is the first work that considers GCNs for KG EA.
It employs the vanilla GCN \cite{GCN} to generate entity embeddings and uses the marginal ranking loss with uniform negative sampling for alignment learning.
\end{itemize}

\noindent\textbf{Datasets.}
We consider two datasets in our experiment.
One is the widely-used dataset DBP15K \cite{JAPE} 
It aims to align the cross-lingual entities extracted from  DBpedia \cite{DBpedia}.
It has three EA settings: ZH-EN (Chinese-English), JA-EN (Japanese-English) and FR-EN (French-English).
The triples in these KGs are extracted from the infobox data of multilingual Wikipedia.
They have similar rather than identical schemata because the data is not mapped to a unified ontology.
Each setting has $15,000$ pairs of identical entities for alignment learning and test.
We follow the data splits of DBP15K and use $30\%$ of entity alignment as training data.
The other dataset is OpenEA~\cite{OpenEA} and we choose its 15K V1 versions of D-W (DBpedia-Wikidata) and D-Y (DBpedia-YAGO), in each of which the two KGs have different schemata.
Each setting also has $15,000$ entity alignment pairs and we follow its data splits and use $20\%$ of entity alignment as training data.

\noindent\textbf{Metrics.}
Following the conventions, we choose Hits@$k$ ($k=1,10$) and mean reciprocal rank (MRR) as metrics to assess EA performance. 
Hits@$k$ measures the proportion of correctly-aligned entities ranked in the top $k$. 
MRR is the average of the reciprocal ranks. 
Higher Hits@$k$ and MRR scores indicate better performance. 

\noindent\textbf{Main results.}
We present the results in Table~\ref{tab:entity_align_results_sf}.
We can observe that the proposed \transmodelname achieves much better performance than MTransE on all datasets.
For example, on FR-EN, the Hits@1 score of \transmodelname is $0.347$, outperforming MTransE by $0.103$.
We find that, as a learning model, MTransE is easy to overfit.
Our model is also derived from TransE but our iteration algorithm can enable our model to get a more stable solution than the learning method.
For aggregation-based EA, our \gcnfloodmodelname achieves comparative Hits@1 results and better Hits@10 scores compared to GCN-Align.
Our \gcnfloodmodelname only considers one-hop neighbors to generate entity similarities (i.e., a one-layer GCN), whose information is less than that in GCN-Align (a two-layer GCN).
However, its advantage lies in that it converges directly to the fixpoint, while the embedding learning method cannot guarantee this.
Overall, \transmodelname and \gcnfloodmodelname that do not need learning can achieve comparable or even better performance than embedding learning baselines.

\noindent\textbf{Running time comparison.}
We compare the running time of our algorithm variants against MTransE and GCN-Align on ZH-EN.
This experiment is conducted using a personal workstation with an Intel Xeon E3 3.3GHz CPU, 128GB memory and a NVIDIA GeForce GTX 1080Ti GPU.
The results are shown in Figure~\ref{fig:running_time}. We observed similar results on the other two datasets.
MTransE uses the least time because it is a shallow model that can be easily optimized.
GCN-Align takes the most time. We find that it converges very slowly and takes many training epochs.
Our \transmodelname and \gcnfloodmodelname take very similar time, which is also less than that of GCN-Align.
In our algorithm, resolving Eq.~(\ref{eq:sim_fixpoint}) costs the most in training time.
Overall, our algorithm, which does not need to learn embeddings, can achieve comparable or even better performance in both effectiveness and efficiency than embedding learning models.

\begin{figure}[!ht]
\centering
\includegraphics[width=0.6\linewidth]{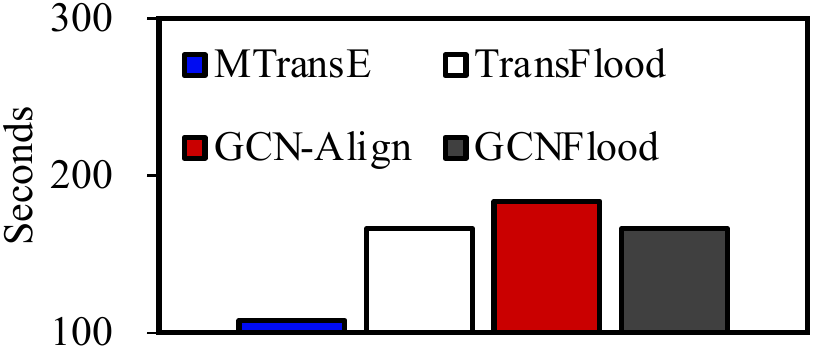}
\caption{Total running time (in seconds) on ZH-EN.}
\label{fig:running_time}
\end{figure}

\begin{table*}[!ht]
\centering
\caption{EA results using text features on DBP15K.}
 \vskip 0.15in
\resizebox{0.75\textwidth}{!}{
\begin{tabular}{lccccccccc}
\toprule
\multirow{2}{*}{Models} & \multicolumn{3}{c}{ZH-EN} & \multicolumn{3}{c}{JA-EN} & \multicolumn{3}{c}{FR-EN} \\
\cmidrule(lr){2-4} \cmidrule(lr){5-7} \cmidrule(lr){8-10}
& Hits@1 & Hits@10 & MRR & Hits@1 & Hits@10 & MRR & Hits@1 & Hits@10 & MRR \\ 
\midrule
RDGCN~\cite{RDGCN} & 0.708 & 0.846 & - & 0.767 & 0.895 & - & 0.886 & 0.957 & - \\
\midrule
\transmodelname & {0.315} & {0.707} & 0.451 & {0.372} & {0.757} & 0.505 & {0.347} & {0.752} & 0.484 \\
\gcnfloodmodelname &  0.349 & {0.761} & 0.490 & 0.376 & {0.770} & 0.512 & 0.349 & {0.761} & 0.490  \\
\midrule
\transmodelname + Text & 0.670 & 0.786 & 0.713 & 0.747 & 0.868 & 0.794 & 0.881 & 0.949 & 0.908 \\
\gcnfloodmodelname + Text & 0.651 & 0.823 & 0.716 & 0.712 & 0.882 & 0.777 & 0.842 & 0.957 & 0.887 \\
\bottomrule
\end{tabular}}
\label{tab:entity_align_results_text}
\end{table*}

\noindent\textbf{Results using text features.}\label{appx:text}
Our similarity flooding algorithm can also use text features to improve performance.
We use multilingual word embeddings \cite{fastText} to encode entity names for computing the similarity matrix, which is further combined with $\Omega$ in our Algorithm~\ref{alg:embed_sf}.
We conduct experiments on DBP15K and present the results in Table~\ref{tab:entity_align_results_text}.
We choose RDGCN~\cite{RDGCN} as a baseline.
We can see that our \transmodelname + Text and \gcnfloodmodelname + Text achieve slightly lower results than RDGCN.
On FR-EN, \transmodelname + Text achieves comparable results with RDGCN.
Moreover, by using text features, both \transmodelname and \gcnfloodmodelname get greatly improved.
These results show the generalization ability of our algorithm.

\subsection{Self-propagation in Neighborhood Aggregation}\label{sect:spa}
Based on our theoretical analysis of embedding-based EA and similarity flooding, 
we derive a new aggregation scheme for EA: self-propagation and neighbor aggregation.

As previously stated, an embedding-based EA model aims to establish a fixpoint of pairwise entity similarities by updating entity embeddings throughout the training process.
Considering that entity similarities are computed using entity embeddings,
the output of GCNs also achieves a fixpoint.
We can rewrite neighborhood aggregation as
\begin{equation} 
\label{eq:sf_gcn}
\mathbf{e} = f\big(\mathbf{e}, \oplus_{z \in N_e}(\mathbf{z})\big),
\end{equation}
which means that the entity embeddings remain ``unchanged'' after aggregation.
For brevity, we use the function $G()$ to denote a GCN layer and consider a two-layer GCN.
Given input embedding $\mathbf{e}^0$, in the fixpoint, we expect to hold
\begin{equation} 
\label{eq:gcn_fixpoint}
\mathbf{e}^2 = G(\mathbf{e}^1) = \mathbf{e}^1 = G(\mathbf{e}^0) = \mathbf{e}^0.
\end{equation}
However, this equation has limitations.
First, in this case, the aggregation function degenerates into an identity mapping.
Second, it almost loses the neighborhood information.
To resolve the issues, inspired by \cite{PredictPropagate}, we enable the GCN output to have a probability of backing to the input.
The aggregation function is rewritten as:
\begin{equation} 
\label{eq:my_gcn}
\mathbf{e}^{i+1} = (1-\alpha)\oplus_{z \in N_e}(\mathbf{z}) + \alpha f(\mathbf{e}^{i}),
\end{equation}
where $\alpha$ is a hyper-parameter indicating the probability of backing to the input.
Here, we use $f()$ to denote a dense layer.
$\mathbf{e}^0$ is randomly initialized as the input embedding of entity $e$.
The output of the GCN, i.e., $\mathbf{e}^2$ for a two-layer GCN, is used for alignment learning and search.
Please note that, although the conventional GCNs also consider the entity itself in neighbor aggregation, the work \cite{PredictPropagate} shows that they would still lose the local focus of the entity itself during layer-by-layer aggregation.

\subsubsection{Properties of Self-propagation}
Taking a deep learning perspective, we find that the proposed self-propagation has several good properties.

\noindent\textbf{Model complexity.}
The proposed self-propagation can be easily combined with any aggregation function without adding additional computational complexity.
It only introduces a dense layer for feature transformation.
Considering that the number of entity embedding parameters is much larger than that of a dense layer,
we argue that the parameter complexity of self-propagation remains similar to that of other aggregation functions.

\noindent\textbf{Relation to PageRank-based GCNs.}
PageRank-based GCNs introduce the possibility of resetting the neighborhood aggregation to its initial state during training \cite{PredictPropagate,PageRankGCN}.
These studies are relevant to random walks with restarts on graphs where the random walk has a probability of backing to the start node after several steps.
The idea is similar to ours.
The difference is that we do not seek the representation of an entity to return to itself after several times of neighborhood aggregation.
Instead, we seek to increase the local focus on the entity representation itself within the iterative neighborhood aggregation.
Self-propagation is also helpful to resolve the over-smoothing issue.

\noindent\textbf{Relation to residual learning.}
The self-propagation can be regarded as a special case of residual learning \cite{Residual} because it builds a skipping connection between two GCN layers.
Given the input $\mathbf{x}$, let $F(\mathbf{x})$ be a representation function (e.g., the $G()$ in our paper), and $H(\mathbf{x})$ be the expected output representation. 
Residual learning indicates that directly optimizing $F(\mathbf{x})$ to fit $H(\mathbf{x})$ is more difficult than letting $F(\mathbf{x})$ fit the residual part $H(\mathbf{x})-\mathbf{x}$.
For aggregation-based EA, we cannot let $H(\mathbf{x})=\mathbf{x}$, in which case the function $F()$ has no representation ability.
Therefore, we introduce the transformation function $f()$, and let $G(\mathbf{x})$ fit $H(\mathbf{x})-f(\mathbf{x})$.
A related work \cite{RSN} shows that the skipping connection would also improve the optimization of KG embeddings.

\subsubsection{Evaluation}
We present our experimental results 
on DBP15K and OpenEA in terms of Hits@1, Hits@10 and MRR scores.

\noindent\textbf{Implementation.}\label{sect:self_imp}
The performance of an EA model relates to not only the embedding learning model (e.g., TransE or GCN) but also other modules, including the alignment learning loss, the negative sampling method, and even the tricks in deep learning such as the parameter initialization method and the loss optimizer.
To study the real effectiveness of self-propagation, we do not develop a new aggregation-based model from scratch. Instead, we choose four representative aggregation-based models: GCN-Align (see Section~\ref{sect:sf_baseline}), AliNet, Dual-AMN and RoadEA, and incorporate self-propagation into them to see performance changes.

\begin{table*}[!t]
\centering
\caption{EA results on DBP15K as well as OpenEA D-W and D-Y. The best scores in each group are marked in bold. The results of baseline models are taken from their papers, respectively, and ``-'' denotes the unreported metric in the corresponding original paper. The results of RoadEA on DBP15K and D-Y are produced using its code. Its results on D-W are taken from its paper.}
\vskip 0.15in
\resizebox{0.999\textwidth}{!}{\large
\begin{tabular}{lccccccccccccccc}
\toprule
\multirow{2}{*}{Models} & \multicolumn{3}{c}{DBP15K ZH-EN} & \multicolumn{3}{c}{DBP15K JA-EN} & \multicolumn{3}{c}{DBP15K FR-EN} & \multicolumn{3}{c}{{OpenEA D-W 15K}} & \multicolumn{3}{c}{{OpenEA D-Y 15K}} \\
\cmidrule(lr){2-4} \cmidrule(lr){5-7} \cmidrule(lr){8-10} \cmidrule(lr){11-13} \cmidrule(lr){14-16}
& Hits@1 & Hits@10 & MRR & Hits@1 & Hits@10 & MRR & Hits@1 & Hits@10 & MRR & {Hits@1} & {Hits@10} & {MRR} & {Hits@1} & {Hits@10} & {MRR} \\ 
\midrule
GCN-Align & 0.413& 0.744 & - & 0.399 & 0.745 & - & 0.373 & 0.745 & - & {{0.364}} & {-} & {0.461} & {0.465} & {-} & {0.536}\\
GCN-Align + \gcnmodelname (ours) & \textbf{0.441} & \textbf{0.751} & - & \textbf{0.446} & \textbf{0.759} & - & \textbf{0.414} & \textbf{0.763} & - & {\textbf{0.378}} & {{0.628}} & {\textbf{0.464}}  & {\textbf{0.495}} & {\textbf{0.688}} & {\textbf{0.565}} \\
\midrule
AliNet & 0.539 & 0.826 & 0.628 & 0.549 & \textbf{0.831} & 0.645 & 0.552 & 0.852 & 0.657 & {0.440} & {\textbf{0.672}} & {0.522} & {0.559} & {\textbf{0.713}} & {0.617} \\
AliNet + \gcnmodelname (ours) & \textbf{0.575} & \textbf{0.829} & \textbf{0.664} & \textbf{0.570} & 0.821 & \textbf{0.658} & \textbf{0.581} & \textbf{0.857} & \textbf{0.678} & {\textbf{0.451}} & {0.668} & {\textbf{0.529}}  & {\textbf{0.563}} & {0.702} & {\textbf{0.624}} \\
\midrule
Dual-AMN & 0.731 & 0.923 & 0.799 & 0.726 & 0.927 & 0.799 & 0.756 & 0.948 & 0.827 & {0.683} & {0.893} & {0.761} & {0.767} & {0.908} & {0.823} \\
Dual-AMN + \gcnmodelname (ours) & \textbf{0.733} & \textbf{0.925} & \textbf{0.804} & \textbf{0.735} & \textbf{0.936} & \textbf{0.807} & \textbf{0.767} & \textbf{0.951} & \textbf{0.835} & {\textbf{0.695}} & {\textbf{0.898}} & {\textbf{0.771}}  & {\textbf{0.779}} & {\textbf{0.912}} & {\textbf{0.832}} \\
\midrule
RoadEA & {0.570} & {{0.848}} & {0.667} & {0.569} & {0.857} & {0.669} & {0.578} & {0.875} & {0.680} & {0.495} & {-} & {0.584} & {0.212} & {0.296} & {0.244}\\
RoadEA + \gcnmodelname (ours) & {\textbf{0.579}} & {\textbf{0.849}} & {\textbf{0.673}} & {\textbf{0.577}} & {\textbf{0.858}} & {\textbf{0.676}} & {\textbf{0.593}} & {\textbf{0.876}} & {\textbf{0.690}} & {\textbf{0.502}} & {0.744} & {\textbf{0.587}} & {\textbf{0.235}} & {\textbf{0.309}} & {\textbf{0.261}}\\
\bottomrule
\end{tabular}}
\label{tab:ent_alignment_gcn}
\end{table*}

\begin{itemize}
\setlength{\itemsep}{0pt}
\setlength{\topsep}{0pt}
\item \textbf{AliNet} \cite{AliNet} extends GCN-Align by introducing distant neighbors in the aggregation function.
Its learning objective is to minimize the limit-based loss with truncated negative sampling \cite{BootEA}.
It concatenates the output of multiple layers as representations for alignment learning and search.
\item \textbf{Dual-AMN} \cite{Dual-AMN} is the state-of-the-art aggregation-based model according to our knowledge.
It designs several advanced implementations, including the proxy matching attention, normalized hard sample mining and loss normalization.
It achieves prominent performance in both effectiveness and efficiency.
\item \textbf{RoadEA} \cite{RoadEA} is a recent GCN-based EA method that considers relations in neighborhood aggregation. It combines relation embeddings and their corresponding neighbor embeddings as relation-neighbor representations and uses graph attention networks \cite{GAT} to aggregate them. 
\end{itemize}

For each baseline model, we adopt its official code and incorporate the proposed self-propagation into its aggregation function.
To be specific, in each of their layers, we add a self-propagation connection between their input and output.
We leave other modules, including the alignment learning loss, the negative sampling method, and the alignment search strategy, unchanged.
As a result, we get four GCN-based model variants, namely ``GCN-Align + \gcnmodelname'', ``AliNet + \gcnmodelname'', ``Dual-AMN + \gcnmodelname'', and ``RoadEA + \gcnmodelname''.

\noindent\textbf{Settings.}
To ensure a fair comparison, the hyper-parameter values in our experiment follow the default settings of the corresponding baselines.
The only exception is that the embedding dimensions of the input and two GCN layers in AliNet+SP are $384$, $384$ and $384$, respectively, which are different from the original settings of $500$, $400$ and $300$ in AliNet.
The reason that we keep these layers with the same output dimension is that we can directly compare the representations of an entity in different AliNet layers (see Section \ref{sect:layer_comparison}).
Note that AliNet concatenates the output of all layers as the final entity representations for alignment learning and search.
In our model, the final embedding dimension is $384+384+384=1152$, slightly smaller than that of AliNet ($500+400+300=1200$).
We find that such a small dimension difference has no observed impact on performance.
In our models, $\alpha=0.1$ for all datasets.

\noindent\textbf{Main results.}
Table \ref{tab:ent_alignment_gcn} presents the EA results of baselines and our model variants on DBP15K.
We can see that our model variants can bring stable improvement on DBP15K, especially on Hits@$1$ and MRR, compared with the corresponding baselines.
For example, AliNet+\gcnmodelname outperforms AliNet by $0.036$ on Hits@1.
Even when compared to the state-of-the-art model Dual-AMN, 
our Dual-AMN+\gcnmodelname still achieves higher performance, especially on JA-EN and FR-EN, establishing a new state-of-the-art.
As we have discussed in Section~\ref{sect:self_imp}, Dual-AMN has many advanced designs to improve performance.
Boosting its performance to a higher level is much more difficult than that for GCN-Align and AliNet.
We find that RoadEA fails to achieve promising results on D-Y.
We think this is because DBpedia and YAGO have an unbalanced number of relations, which affects the relational attention mechanism in RoadEA.
However, our self-propagation still improves it, showing good robustness.
To summarize, this comparison demonstrates the effectiveness and generalization of the proposed self-propagation for EA.
We conduct additional experiments in the following two subsections to further investigate the reasons for the good performance of self-propagation.

\noindent\textbf{Effectiveness of self-propagation against over-smoothing.} 
The over-smoothing issue of GCNs refers to the fact that the output representations tend to be similar if too many layers are used for neighborhood aggregation \cite{OverSmoothing,MeasureOverSmoothing}.
It is obvious that such an issue has a negative impact on embedding-based EA.
The default settings of GCN layer numbers in GCN-Align, AliNet and Dual-AMN are all $2$.
To investigate the over-smoothing issue in EA,
we show in Figure~\ref{fig:layer_num} the Hits@1 results of these baselines (in blue) and our model variants (in red) on ZH-EN when their layer numbers are set as $1,2,3,4$, respectively.
Both GCN-Align and AliNet suffer from over-smoothing.
Their results decrease as the GCNs go deeper with more than two layers.
By adding the self-propagation connection, their performance degradation is reduced.
By contrast, Dual-AMN shows good robustness against over-smoothing.
Its performance changes little when the layer number increases.
Dual-AMN+\gcnmodelname also benefits from such robustness.
Dual-AMN uses the normalized hard sample mining method with a large number of negative examples, enabling dissimilar entities to have distinguishable representations.

\begin{figure}[!t]
\centering
\includegraphics[width=0.95\linewidth]{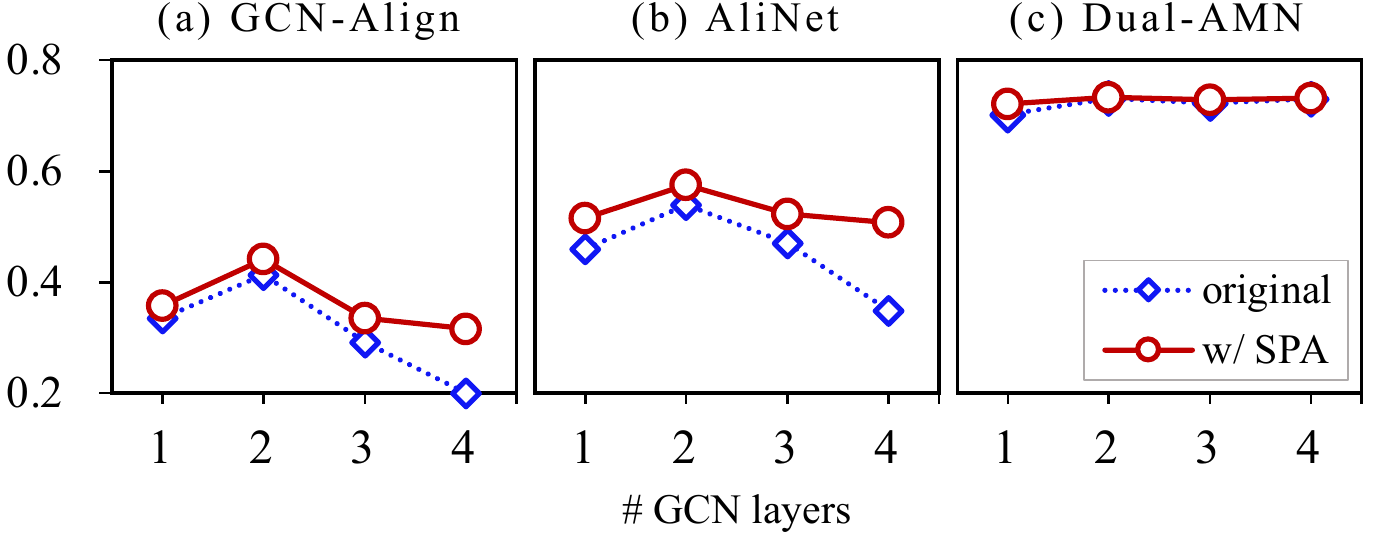}
\caption{Hits@1 on ZH-EN w.r.t. the number of GCN layers.}
\label{fig:layer_num}
\vspace{-5pt}
\end{figure}

\noindent\textbf{Layer output representation comparison.}\label{sect:layer_comparison}
We further compare the output representation distance of the last two layers in AliNet and AliNet+\gcnmodelname.
Figure~\ref{fig:layer_distance} shows the average Euclidean distance w.r.t. the first 80 training epochs on ZH-EN.
We can see that the output representation distance of both AliNet and AliNet+\gcnmodelname becomes smaller as the validation performance increases.
Furthermore, by adding the proposed self-propagation connection, the layer output distance of AliNet+\gcnmodelname is smaller than that of AliNet.
These results provide experimental evidence to support our design of self-propagation to connect two GCN layers and increase the local focus on the entity embedding itself.

\section{Related Work}\label{sect:related_work}
Our work is relevant to multi-sourced KG embedding learning and iteration-based node matching methods for graphs.

\subsection{Multi-sourced KG Embeddings}
Multi-sourced KG representation learning starts with the research on embedding-based EA.
An embedding-based EA model learns and measures entity embeddings to compute entity similarities.
It usually has two learning objectives.
One is for embedding learning and the other is for alignment learning.
Translation-based EA models \cite{MTransE,IPTransE,JAPE,TransEdge,SEA} adopt TransE \cite{TransE} or its variants for embedding learning.
Aggregation-based EA models adopt GNNs to generate entity embeddings,
including the vanilla GCNs \cite{GCNAlign}, multi-hop GCNs \cite{AliNet}, relational GCNs \cite{MultiRelationalGCN}, graph attention networks \cite{CG-MuAlign,Dual-AMN,RoadEA}, self-supervised GCNs \cite{SelfKG} and temporal GCNs \cite{TemporalRelEA}.
Our proposed self-propagation is a plug-in for GNNs.  It adds a direct connection between entity representations and the aggregated neighbor representations.
In addition to the above two types of basic models for EA, 
other studies consider using semi-supervised or active learning techniques to augment EA \cite{BootEA,KDCoE,UPLR,ActiveEA_ECIR,ActiveEA,RAC} or introduce some text features (e.g., entity names, attributes and descriptions) \cite{JAPE,AttrE,RDGCN} or temporal information \cite{TEA-GNN,TREA} to enhance embedding learning.
These studies are not relevant to our work.
Interested readers can refer to the survey \cite{EA_survey,EA_TKDE,EA_VLDBJ} for more details.
However, our work can also benefit from side features.

\begin{figure}[!t]
\centering
\includegraphics[width=0.61\linewidth]{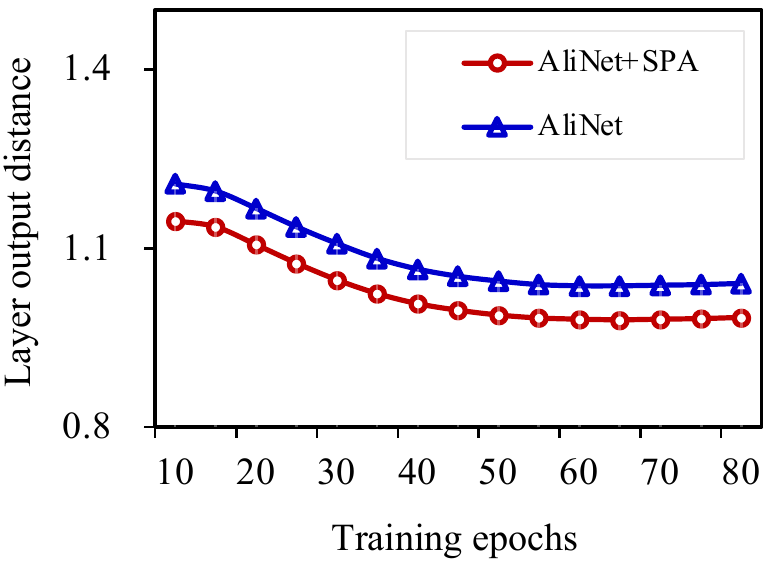}
\caption{Output representation distance of the last two layers in AliNet and AliNet+\gcnmodelname on ZH-EN.}
\label{fig:layer_distance}
\vspace{-5pt}
\end{figure}

\subsection{Iteration-based Graph Matching}
Computing node similarities in graphs is a long-standing research topic in many areas, such as databases.
Our work is relevant to iteration-based similarity computation methods, including similarity flooding \cite{Similarity_flooding}, SimRank \cite{SimRank} and NetAlignMP \cite{NetAlignMP}.
Their key assumption is that ``two nodes are similar if their neighbors are similar''.
They first compute the similarity of some pairs of nodes. 
Then, they propagate these similarities to other related node pairs using different heuristic rules iteratively, until they achieve a fixpoint of node pairwise similarities.
Our work shows that the embedding-based EA models follow the same key assumption as the conventional iteration-based graph alignment methods.
We build a connection between the two types of methods, which would help users acquire deep insights into them.

\section{Conclusions and Future Work}\label{sect:conclusion}
In this paper, we present a similarity flooding perspective to understand translation-based and aggregation-based EA models.
We prove that these models essentially seek a fixpoint of entity pairwise similarities through embedding learning.
Based on this finding, we propose two methods, i.e., similarity flooding via entity compositions and self propagation, for improving EA.
Experiments on benchmark datasets demonstrate their effectiveness.
Our work fills the gap between recent embedding-based EA and the conventional iteration-based graph matching.

We think there are two promising directions for future work.
The first is to develop neural-symbolic EA models that take advantage of both the representation learning ability of neural models and the interpretability of conventional symbolic methods.
The second is, given EA, to learn more expressive and transferable multi-sourced KG embeddings to improve downstream knowledge-enhanced tasks.
A KG-enhanced task can be extended into a multi-sourced KG-enhanced task.
The latter can benefit from the knowledge transfer in multi-sourced KGs and thus get further improvement.


\section*{Acknowledgments} 
This work is funded by the National Natural Science Foundation of China (No. 62272219) and the Alibaba Group through Alibaba Research Fellowship Program.


\bibliography{sample-base}
\bibliographystyle{icml2023}


\end{document}